%
%
%
%


\documentclass[a4paper]{article} 
\usepackage{a4wide}

\usepackage[usenames,dvipsnames]{xcolor}
\usepackage[english]{babel} 
\usepackage[color=cyan]{todonotes}

\usepackage{tikz}
\usetikzlibrary{arrows.meta, patterns}
\usepackage{url}
\usepackage{datetime}
\usepackage{hyperref}
\usepackage[hyphenbreaks]{breakurl}

\usepackage{tcolorbox}
\tcbuselibrary{skins}

\usepackage{amssymb,amsmath,amsthm,amsfonts}
\usepackage{mathtools}
\usepackage[T1]{fontenc}  
\definecolor{color1}{RGB}{0,139,0} 

\setlength{\overfullrule}{0pt}
\numberwithin{equation}{section} 
\newcommand{\E}{\mathcal{E}}

\hypersetup{hidelinks,colorlinks,breaklinks=true,urlcolor=black,citecolor=color1,linkcolor=color1,bookmarksopen=false,pdftitle={Title},pdfauthor={Author}}


\makeatletter
\def\@endtheorem{\endtrivlist}
\makeatother

\newtheoremstyle{abcd}
  {}
  {}
  {\itshape}
  {}
  {\bfseries}
  {.}
  {.5em}
  {}

\theoremstyle{abcd}
\newtheorem{theorem}{Theorem}
\numberwithin{theorem}{section}

\newtheorem{conjecture}[theorem]{Conjecture}

\newtheorem{lemma}[theorem]{Lemma}







\begin{document}

\title{The Effect of Iterativity on Adversarial Opinion Forming}

\author{Konstantinos Panagiotou \and Simon Reisser}
\date{$1^{\text{st}}$ December, 2021}
\maketitle 
\thispagestyle{empty} 

\begin{abstract}
Consider the following model to study adversarial effects on opinion forming. A set of initially selected experts form their binary opinion while being influenced by an adversary, who may convince some of them of the falsehood.
All other participants in the network then take the opinion of the majority of their neighbouring experts. 
Can the adversary influence the experts in such a way that the majority of the network believes the falsehood? Alon et al.~\cite{Alon2015} conjectured that in this context an iterative dissemination process will always be beneficial to the adversary. This work provides a counterexample to that conjecture.
\end{abstract}

\section{Introduction}

Understanding how opinions are formed is as important as ever, as the spread of misinformation becomes more prevalent every day. 
Assume there is some new innovation being either good or bad that is introduced to a group of people who want to form their (binary) opinion about it. Following a key insight by Rogers \cite{Rogers2003}, the opining forming process can be modelled as follows. At first, a small set of so-called early adopters, or experts, forms their opinion about the newly introduced innovation. Afterwards, they disseminate their opinion to all other non-experts in the network.

When looking at that network from the outside an observer wants to infer the quality of the new innovation by observing the opinion of all individuals, but without taking the actual structure of the network into consideration (maybe by doing a poll).
One popular method to achieve this is using the \emph{wisdom of the crowd}.
In this case that corresponds to a simple majority rule, that is, the observer takes the majority of opinions as an estimate.
Wisdom of the crowd has been shown to have a plethora of useful applications in decision making, see e.g.~\cite{Cooke2008, Morgan2014, Oprea2009, Aspinall2010, Budescu2015, Mellers2014}.

Assume furthermore that there is an adversary who can  influence the opinion of some early adopters so as to  falsely convince the observer of the new innovation's quality.
Let us look at some examples.
Consider the so-called Black-Hat ASIN Piggybacking on Amazons Marketplace~\cite{Masters2019}. This is the method of hijacking the listing of an Amazon vendor to sell counterfeit products under the (dis-)guise of a genuine listing. Some customers then buy the real product and some buy the fake one. This results in the vendor to lose profit as well as him getting negative reviews that do not correspond to the actual product. The second example is a newly opened restaurant, that in its opening phase invites food critics to try and rate the restaurant. However, when those critics dine at the restaurant, the restaurant puts in more effort than it would when catering to a regular customer, e.g., by providing better quality food and service. Lastly, consider the common practice of online vendors to buy positive reviews for their products by either giving directly monetary incentives to reviewers or providing them with free products. In particular, on Amazon in certain product categories, like Bluetooth speakers and headphones, ReviewMeta \cite{Noonan2016} finds more than half of reviews to be fake \cite{Dwoskin2018}.

\paragraph{A Model for Opinion Forming}
In the previous examples we saw three different sorts of adversaries: the hijacking seller  influenced negatively the opinions of some customers; the restaurant owner could actively choose which critics to influence; finally, the seller that bought his reviews could select the reviewers as well as guarantee their opinion.
Alon et al.~\cite{Alon2015} introduced a model that implements the ideas outlined above. Given a graph $G=(V,E)$ on $n$ vertices and parameters $0\le \mu<1/2 ,0<\delta\le 1/2$ we define the set of experts as a set $\mathcal{E}\subseteq V$ with the property that $|\mathcal{E}|=\mu |V|=\mu n$. Let $\mathcal{E}$ be furthermore  divided into two subsets: the experts that know the truth $\mathcal{E}_1\subseteq \mathcal{E}$ and the experts that are convinced of the falsehood  $\mathcal{E}_0= \mathcal{E}\setminus \E_1$. The sets $\mathcal{E}_1, \E_0$ are chosen in three different ways that correspond to the various adversaries described in the previous paragraphs. 

The \emph{random} adversary has actually no choice. He chooses the expert set $\E$ uniformly at random among all sets of size $\mu |V|$. Then $\E$ is in turn partitioned into $\E_1$ and $\E_0$ by adding each vertex in~$\E$ to~$\mathcal{E}_1$ independently with probability $1/2+\delta$  and to $\E_0$ otherwise. The \emph{weak} adversary is allowed to choose the expert set with the restriction that~$|\E|=\mu |V|$; the selected set  is then partitioned into~$\E_1$ and~$\E_0$ like in the random adversary. Finally, the \emph{strong} adversary chooses~$\E, \E_1$ and $\E_0= \E\setminus \E_1$ arbitrarily such that~$|\E|=\mu |V|, |\E_1|=(1/2+\delta)|\E|$ and consequently $|\E_0|=(1/2-\delta)|\E|$. We will ignore rounding issues througout to facilitate the presentation.

All vertices that know the truth in a graph are assigned the label `1', including all vertices in~$\E_1$, and all vertices that believe a falsehood are labeled `0'. Vertices without an opinion bear no label. The experts disseminate their opinions to the non-experts $V\setminus \E$ by a majority rule, that is, every vertex in $V\setminus \E$ takes the opinion of the majority of its neighbouring experts. To be completely explicit, a non-expert  is labeled `1'/'0' if more that half of its neighbouring experts are labeled `1'/'0'.
Vertices at which there is no majority -- because of a tie of `1's and `0's or because they have no expert neighbours -- decide upon their opinion uniformly at random, i.e., each of these vertices is independently labeled `1' with probability 1/2 and `0' otherwise.

We say that a graph is \emph{robust} against the random/ weak/ strong adversary if with high probability, for any choice of the expert set, after the dissemination process more than half of the vertices are labeled `1'. 
'With high probability' means with probability approaching 1 as $n$ approaches infinity, which we sometimes abbreviate with whp. In \cite{Alon2015} the authors studied which properties of a graph make it robust. They discovered that all graphs with maximal degree being sub-linear in $n$ are robust against the weak adversary. Furthermore, they showed that certain well-connected networks are robust against the strong adversary. In particular, such networks are either  Erd\H{o}s-R\'enyi random graphs having edge probability $p$ greater than $c/n$ for a suitable constant $c>0$, or expander graphs, with $d, \lambda_2$ being the largest and second largest eigenvalue of its adjacency matrix, satisfying
$d \ge \lambda_2/(\delta \sqrt{\mu (1-\mu +2\delta\mu)}).$
\begin{figure}[b]
\centering
\begin{tikzpicture}
\begin{scope}[every node/.style={circle,thick,draw,minimum size=0.5cm}]
    \node[fill=red] (X) at (-1,2) { };
   \node[fill=red] (A) at (0,2) { };
   \node[fill=red] (B) at (1,2) { };
   \node[fill=red] (C) at (2,2) { };
   \node[fill=red] (D) at (3,2) { };
   \node[fill=red] (E) at (4,2) { };
   \node[fill=blue] (F) at (5,2) { };
   \node[fill=blue] (G) at (6,2) { };
   \node[pattern=dots,pattern color=blue] (H) at (7,2) { };
   \node[pattern=dots,pattern color=blue] (I) at (8,2) { };
   \node[pattern=dots,pattern color=blue] (J) at (9,2) { };
   \node[pattern=dots,pattern color=blue] (K) at (10,2) { };
   \node[pattern=dots,pattern color=blue] (L) at (11,2) { };
\end{scope}
\begin{scope}[>={Stealth[black]},
              every edge/.style={draw=black, very thick}]
     \path (X) edge node {} (A);
    \path (A) edge node {} (B);
    \path  (B) edge node {} (C);
    \path  (C) edge node {} (D);
    \path  (D) edge node {} (E);
    \path  (E) edge node {} (F);
    \path  (F) edge node {} (G);
    \path  (G) edge node {} (H);
    \path  (H) edge node {} (I);
    \path  (I) edge node {} (J);
    \path  (J) edge node {} (K);
     \path  (K) edge node {} (L);
\end{scope}
\end{tikzpicture}
\begin{minipage}{1\textwidth}
~\\~\\
\end{minipage}

\begin{tikzpicture}
\begin{scope}[every node/.style={circle,thick,draw,minimum size=0.5cm}]
    \node[fill=red] (X) at (-1,2) { };
   \node[fill=red] (A) at (0,2) { };
   \node[fill=red] (B) at (1,2) { };
   \node[fill=red] (C) at (2,2) { };
   \node[fill=red] (D) at (3,2) { };
   \node[fill=red] (E) at (4,2) { };
   \node[fill=blue] (F) at (5,2) { };
   \node[pattern=dots,pattern color=blue] (G) at (6,2) { };
   \node[pattern=dots,pattern color=blue] (H) at (7,2) { };
   \node[fill=blue] (I) at (8,2) { };
   \node[pattern=dots,pattern color=blue] (J) at (9,2) { };
   \node[] (K) at (10,2) { };
   \node[] (L) at (11,2) { };
\end{scope}
\begin{scope}[>={Stealth[black]},
              every edge/.style={draw=black, very thick}]
    \path (X) edge node {} (A);
    \path (A) edge node {} (B);
    \path  (B) edge node {} (C);
    \path  (C) edge node {} (D);
    \path  (D) edge node {} (E);
    \path  (E) edge node {} (F);
    \path  (F) edge node {} (G);
    \path  (G) edge node {} (H);
    \path  (H) edge node {} (I);
    \path  (I) edge node {} (J);
    \path  (J) edge node {} (K);
     \path  (K) edge node {} (L);
\end{scope}
\end{tikzpicture}
\begin{minipage}{0.88\textwidth}
\caption{\small This figure shows an example from \cite{Alon2015}. The colors red/blue correspond to the experts labeled `1'/'0'. The dotted vertices indicate their label after the dissemination process, the unmarked vertices are decided randomly. In the first line graph we consider the iterative strong adversary, where only the rightmost blue vertex determines the label of all remaining vertices. In the second line we consider the non-iterative setting, each blue expert can at most convince two non-experts. If $1/2+\delta>3(1/2-\delta)$ and $n$ is large, the adversary can not hope to convince more that half of all vertices. 
}
\label{fig:LitRef}
\end{minipage} 
\end{figure}
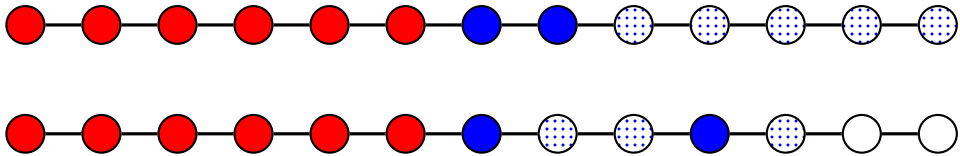
\paragraph{Iterative Dissemination}
In \cite{Alon2015} the authors also introduced an iterative version of the model with a more dynamic dissemination process. The \emph{iterative model} also starts with labeled experts and all non-experts are labeled according to the majority of their neighbouring experts. Ties that involve at least one expert are broken uniformly at random. All non-experts without any expert neighbours, however, do not form their opinion right away, but remain unlabeled. This process is then iterated by considering all vertices with label `1' and all vertices with label `0', 
until all  vertices are labeled. 

A natural question is whether iterativity helps or hinders the adversary.
Intuitively, iterativity ought to be beneficial for the adversary.
If a graph is not robust against a non-iterative adversary, then there is a choice of expert sets such that after one round of dissemination there are more vertices that are labeled '0' than '1'.
The remaining vertices without expert neighbours are then either decided randomly (non-iterative) or there are subsequent rounds of dissemination (iterative). As there now are more '0' labeled vertices that '1' labeled vertices, deciding the label of the remaining vertices by dissemination should be beneficial for the adversary.
Indeed, the authors of \cite{Alon2015} provided examples where this is the case. For example, they showed that for suitable values of~$\mu$ and~$\delta$, a line graph is robust against the non-iterative strong/weak adversaries, but not against their iterative versions, see Fig.~\ref{fig:LitRef}. 

However, in \cite{Alon2015}  an additional example, where for the weak adversary the opposite is true, was constructed. Consider a graph that is a disjoint union of a star  and a $d$-regular expander graph.
Place one expert in the center of the star and distribute the other experts as evenly as possible on the expander. In the non-iterative setting, each expert in the expander will spread its label to $d$ many non-experts. If the expert in the center of the star is labeled `0', all vertices in the star are labeled `0' as well, outweighing the difference between `1's and `0's in the expander.
In the iterative setting however, each expert does not only spread its label to $d$ many other vertices, but all vertices in the expander will be labeled at the end of the dissemination, roughly in the same ratio as that of the experts in the beginning.
Now the difference in `1's and `0's is so large that  even if all vertices in the star were labeled with `0' can sway the majority. 

Guided by the intuition described previously, it seems that no such construction can work for a strong adversary. In the previous example of the graph consisting of an expander and a star the adversary can place all '1'-labeled experts on the star  and all others in the expander. Then, all vertices in the expander will be labeled '0' resulting in a clear majority. Consequently, in~\cite{Alon2015} the  following conjecture concerning the effect of iterativity in that case was made. 
\begin{conjecture}[\cite{Alon2015}]\label{conjectureAlon}
In the case of a strong adversary an iterative propagation can never harm the adversary.
\end{conjecture}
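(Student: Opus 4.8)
The plan is to prove the conjecture by showing that the strong adversary can always \emph{replay} an optimal non-iterative strategy in the iterative model and fare at least as well. Fix a graph $G=(V,E)$ and suppose the non-iterative strong adversary has a placement $\E$ together with a labelling $(\E_1,\E_0)$ that produces, whp, a strict majority of `0'-labels. I would use the same $\E$ and $(\E_1,\E_0)$ in the iterative model and compare the two outcomes round by round. The first iterative round is identical to the single non-iterative round on the set $A\subseteq V$ consisting of the experts together with all non-experts having at least one expert neighbour: the strict-majority rule and the random tie-breaking coincide there. The only difference is the set $B=V\setminus A$ of vertices with no expert neighbour, which are decided by independent fair coins in the non-iterative model but remain unlabelled (to be resolved by later rounds) in the iterative model. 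Since the fair coins contribute, whp, a balanced number of `0's and `1's on $B$, the non-iterative success hypothesis is equivalent to a `0'-surplus on $A$, say $(\#\,\text{`0' in } A)-(\#\,\text{`1' in } A)\ge \varepsilon n$ for some $\varepsilon>0$. The conjecture would then follow if the iterative propagation into $B$ never destroys this surplus.

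The natural tool is a monotone potential. Let $\Phi_t$ denote the number of `0'-labelled minus `1'-labelled vertices after iterative round $t$, so that $\Phi_1\ge \varepsilon n>0$ by the previous step. I would try to show that $\Phi_t$ is non-decreasing in $t$, which would immediately yield a final `0'-majority. Each newly labelled vertex $v\in B$ adopts the strict-majority label among its already-labelled neighbours, so the argument would hinge on a local-to-global statement: whenever the current labelled set carries a `0'-surplus, every vertex entering the labelled set at the frontier should, on average, be at least as likely to receive label `0' as label `1'. If one could establish such a frontier inequality, summing it over all rounds would give the non-decrease of $\Phi_t$ and close the argument.

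The hard part -- and, I suspect, the point at which the conjecture is most vulnerable -- is exactly this local-to-global step. A global `0'-surplus does not force a `0'-majority in the neighbourhood of any particular unlabelled vertex: the minority `1'-labels can be concentrated so that the frontier of $B$ sees a local `1'-majority, whereupon the entering vertex is labelled `1', and this `1' can cascade deeper into $B$, reversing the sign of $\Phi_t$. Ruling this out seems to require structural assumptions (expansion, bounded degree, or connectivity of the `0'-dominated region) that a worst-case graph need not satisfy, and even \emph{adapting} the placement $\E$ to seed a well-connected `0'-dominated region may fail if the graph's connectivity is arranged to favour the `1'-region. I would therefore expect the monotonicity to hold only under additional hypotheses, and the crux of a complete proof to lie in either provably excluding such sign-reversing cascades or in exhibiting a strategy transformation that circumvents them on every graph.
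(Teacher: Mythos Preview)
Your proposal cannot succeed because the conjecture is \emph{false}: the paper does not prove Conjecture~\ref{conjectureAlon}, it \emph{refutes} it by constructing an explicit counterexample (Theorem~\ref{conjecture}). So there is no correct proof to compare against, and the gap you yourself flag --- the local-to-global step --- is not a technical difficulty to be overcome but the actual reason the statement fails.

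Your diagnosis of the failure point is exactly right. You observe that a global `0'-surplus on $A$ need not force a `0'-majority on the frontier of $B$, because the minority `1'-labels can be concentrated adjacent to $B$. The paper's construction realises precisely this obstruction. The vertex set is partitioned into cliques $I,J,O,P$ (with carefully tuned random bipartite graphs between them) together with an independent set $D$ of pendant vertices, each attached to a distinct vertex of $J$. With the placement $\E_1=I$, $\E_0=O$, the first round labels $I\cup J$ with `1' and $O\cup P$ with `0', and these two unions have equal size $(1-d)n/2$. The set $B$ of vertices without expert neighbours is exactly $D$. In the non-iterative model the $dn$ vertices of $D$ are decided by fair coins, so with probability $1/2$ the `0'-side wins --- hence $G$ is not robust. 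In the iterative model, however, every vertex of $D$ has its unique neighbour in $J$, which is already labelled `1'; the second round therefore labels all of $D$ with `1', giving a strict `1'-majority. The bulk of the paper's work (Lemma~\ref{lemma_robust}) is verifying that \emph{no other} choice of $\E_1,\E_0$ rescues the iterative adversary, so that $G$ is genuinely robust in the iterative setting.

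In your notation, the counterexample has $\Phi_1=0$ (up to $o(n)$) rather than $\Phi_1\ge\varepsilon n$: the non-iterative adversary's success comes entirely from the coin flips on $B$, not from any surplus on $A$. Your argument that ``fair coins contribute, whp, a balanced number'' and hence non-iterative success implies $\Phi_1\ge\varepsilon n$ is therefore already wrong --- non-robustness only requires the `0'-majority to occur with probability bounded away from~$0$, not whp, and a zero surplus on $A$ plus fair coins on $B$ achieves exactly that. Even had you patched this, the monotonicity of $\Phi_t$ fails for the structural reason you anticipated.
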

Equivalently, the conjecture states that there is no graph 
that is robust against the iterative strong adversary and simulaneously not robust against the non-iterative version -- in this precise sense iterativity does not harm/can only help the adversary.

\paragraph{Related Results}
Besides of \cite{Alon2015}, where this model for opinion formation was introduced, there is one more work that studies questions in this precise framework. In his doctoral thesis \cite{Daknama2018}, Daknama studied resilience properties of random graphs. 'Local resilience' in this context refers to the largest number of edges, which are adjacent to any vertex, that can be removed so that the graph still is robust against the strong adversary. In~\cite{Daknama2018} it was shown that one can delete up to a fraction of $2(1-\mu+2\delta \mu)\delta/(1+2\delta)$ of all edges at each vertex without affecting robustness. 

There are also other directly related studies in opinion forming, which, however, do not use the exact model presented here. These papers include studies on word of mouth \cite{Young2009}, group recommendation \cite{Andersen2008, Grandi2016,lev2017group,Faliszewski2016} and informational cascades \cite{Bikhchandani1992, Bikhchandani1998, Watts2002, Alon2012, Feldman2014}. For further references see also \cite{Alon2015}.

\paragraph{Result} 
The contribution of this paper is to refute Conjecture \ref{conjectureAlon}. The idea is to consider a graph that has non-robustness against the non-iterative strong adversary in a very weak way. More concretely, the majority for '0' labeled vertices is only achieved if a majority of vertices without expert neighbours is labeled '0'. If we consider iterativity, then the adversary has no clear advantage in a subsequent round of the dissemination, as there are roughly equally many '1'- and '0'-labeled vertices. Additionally, we can construct the graph in a way such that the vertices without expert neighbours are connected to vertices that are labeled '1' in the first round of the dissemination, so that the adversary \emph{gets harmed}.

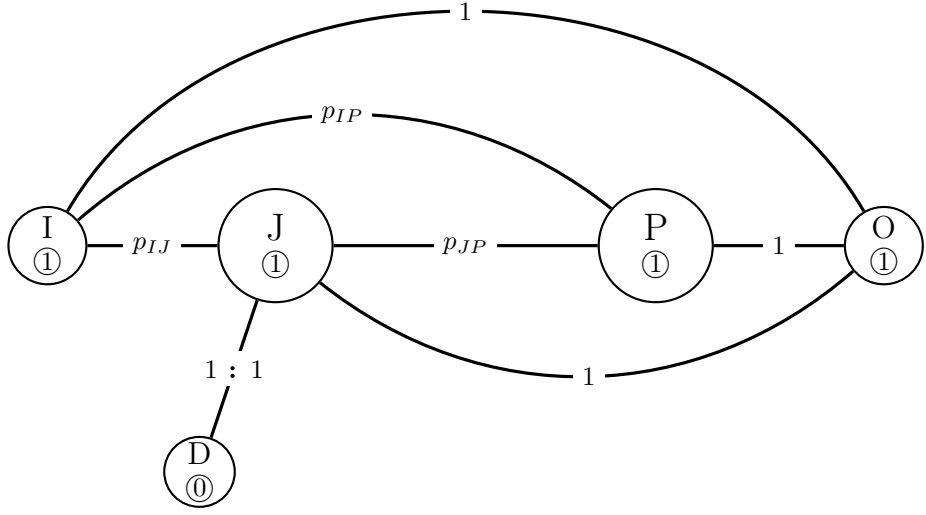
\begin{figure}[h]
\centering
\begin{tikzpicture}
\begin{scope}[every node/.style={circle,thick,draw,text width=0.5cm,align=center}]
   \node[minimum size=1cm,, inner sep=1pt] (A) at (0,2) {{\large I} \raisebox{.5pt}{\textcircled{\raisebox{-.9pt} {1}}}};
    \node[minimum size=1.5cm] (B) at (3,2) {{\Large J} \raisebox{.5pt}{\textcircled{\raisebox{-.9pt} {1}}}};
    \node[minimum size=1.5cm] (C) at (8,2) {{\Large P} \raisebox{.5pt}{\textcircled{\raisebox{-.9pt} {1}}}};
    \node[minimum size=1cm, inner sep=1pt] (D) at (11,2) {{\large O} \raisebox{.5pt}{\textcircled{\raisebox{-.9pt} {1}}}};
    \node[inner sep=0pt] (E) at (2,-1) {{\large D} \raisebox{.5pt}{\textcircled{\raisebox{-.9pt} {0}}}};
\end{scope}

\begin{scope}[>={Stealth[black]},
              every node/.style={fill=white,shape=rectangle},
              every edge/.style={draw=black, very thick}]
    \path (A) edge node {$p_{IJ}$} (B);
    \path  (B) edge node {$p_{JP}$} (C);
    \path  (A) edge[bend left=40] node {$p_{IP}$} (C);
    \path  (D) edge node {$1$} (C);
    \path  (D) edge[bend left=40] node {$1$} (B);
    \path  (A) edge[bend left=60] node {$1$} (D);
    \path  (B) edge node {$1\textbf{ : }1$} (E); 
\end{scope}
\end{tikzpicture}
\begin{minipage}{0.85\textwidth}
\caption{\small This figure shows the graph $G$. The numbers on the edges and in the vertices give the probability that an edge is present between/in the components. For example, any edge with one vertex in  $I$ and one in $J$ exists independently with probability $p_{IJ}$. Every vertex in $D$ has exactly one distinct neighbour in $J$ and no other neighbours, i.e., every vertex in $D$ has degree 1 and no two vertices in $D$ have a common neighbour. }
\label{fig:Result}
\end{minipage}
\end{figure}
Consider the following graph that implements these ideas. Let $0<\mu,\delta<1/2$ and $0< \varepsilon_1<2\delta/(1/2+\delta)$, $0<\varepsilon_2<(1/2-\delta)/(1/2+\delta)$ as well as $0<d<(1-\mu-2\delta\mu)/3$. Then the  graph $G=(V,E),\ |V|=n$ is given by $V=I~\dot\cup ~J~\dot\cup~ O~\dot\cup~ P~\dot\cup ~D$ such that 
$$|I\cup J|=|O\cup P|=(1-d)\frac{n}{2},\quad |D|=dn,\quad |I|=\mu \left(\frac{1}{2}+\delta\right)n\quad\text{and} \quad |O|=\mu \left(\frac{1}{2}-\delta\right)n.$$
The subset $D$ forms an independent set. In contrast, $I,J, O$ and $P$ each form a clique. Every vertex in $O$ is connected to all other vertices except to those in $D$. Between $I$ and $J$, $I$ and $P$ and $J$ and $P$ are random bipartite graphs with edge-probabilities $p_{IJ}$, $p_{IP}$ and $p_{JP}$ respectively. Every vertex in $D$ has degree one, with the  unique neighbor being in $J$; moreover, no two vertices in $D$ have the same neighbour. There are no more edges.
Set
$$p_{IJ}=p_{JP}=\frac{1/2-\delta}{1/2+\delta}+\varepsilon_1\qquad \text{and}\qquad \qquad p_{IP}=\frac{1/2-\delta}{1/2+\delta}-\varepsilon_2.$$
See Fig.~\ref{fig:Result} for a depiction of $G$. 

Assume for now that the adversary chooses $\E_1=I$ and $\E_0=O$. Then whp all vertices in $P$ will have $\approx\varepsilon_2|I|$
more neighbours in $O$ than in $I$ by choice of $p_{IP}$, thus they will be labeled '0' independently of iterativity.
In contrast, vertices in $J$ have $\approx\varepsilon_1|I|$ more neighbours in $I$ than in~$O$ and will consequently be labeled '1'. Summarizing, we have that all vertices in $I\cup J$ are labeled '1' and  all vertices in $O\cup P$ are labeled~'0'.
As both unions have by construction the same size, the labels of vertices in $D$ decide whether the adversary succeeds or not. This is where (non-)iterativity comes into play. In the non-iterative setting, vertices in $D$ will choose uniformly at random, as they have no neighbours in $I \cup O$. So, with positive probability there will be more vertices labeled '0' than '1' in $D$, consequently granting a majority of '0'-labeled vertices. In the iterative setting however, all vertices in $D$ will be labeled '1', as they are exclusively connected to vertices in $J$. Thus, when choosing $\E_1=I$ and $\E_0=O$ the iterative adversary fails, while the non-iterative adversary succeeds.  
Choosing the proportions of $I,O, J$ and $P$ and the edges between them suitably, we can make sure that choosing $\E_1$ and $\E_0$ differently is not advantageous for the adversary and therefore $G$ is indeed robust against the iterative strong adversary.  The main result of this paper is to show that the graph $G$ has indeed the properties outlined above. 
\begin{theorem}\label{conjecture}
For all $0<\mu<1/2$ and $1/6<\delta<1/2$ there are  $\varepsilon_1,\varepsilon_2, d>0$ such that $G$ is whp robust against the iterative strong adversary, but not against the non-iterative strong adversary.
\end{theorem}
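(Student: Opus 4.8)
The statement splits into two independent claims, and I would prove them separately, starting with the (easier) non-iterative one. For non-robustness against the non-iterative strong adversary, take the strategy $\E_1=I$, $\E_0=O$ already singled out before the theorem, so that all experts lie in $I\cup O$. A non-expert $v\in J$ is joined to all of $O$ (since $O$ is complete to $V\setminus D$) and to $\mathrm{Bin}(|I|,p_{IJ})$ vertices of $I$; as $p_{IJ}|I|=|O|+\varepsilon_1|I|$ and $\varepsilon_1|I|=\Theta(n)$ dwarfs the $O(\sqrt{n\log n})$ fluctuations, a Chernoff bound and a union bound over the at most $n$ vertices of $J$ give that whp every $v\in J$ has strictly more neighbours in $I$ than in $O$, hence is labelled $1$; symmetrically $p_{IP}|I|=|O|-\varepsilon_2|I|$ gives that whp every $v\in P$ is labelled $0$. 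Thus whp $I\cup J$ (of size $(1-d)n/2$) is entirely $1$ and $O\cup P$ (also of size $(1-d)n/2$) entirely $0$, while each of the $dn$ vertices of $D$, having no expert neighbour, is labelled $1$ independently with probability $1/2$. By the central limit theorem, with probability tending to $1/2$ at most $dn/2$ of them get label $1$, and then at most $n/2$ vertices of $G$ carry label $1$; hence it is not true that whp more than half the vertices are labelled $1$, so $G$ is not robust against the non-iterative strong adversary. (This part uses only $0<\delta<1/2$.)

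For robustness against the iterative strong adversary one must defeat \emph{every} placement, which the adversary may choose after inspecting the random bipartite graphs. I would encode a strategy by its profile $\vec m=(a_1,a_0,b_1,b_0,c_1,c_0,e_1,e_0,f_1,f_0)$ recording how many $1$- and $0$-experts it puts into $I,J,O,P,D$, subject to $\sum(\cdot)_1=(1/2+\delta)\mu n$, $\sum(\cdot)_0=(1/2-\delta)\mu n$ and the capacity constraints. Since $O$ is complete to $V\setminus D$ and everything else is a clique or a random bipartite graph, a non-expert $v$ in $X\in\{I,J,P\}$ is labelled $1$ in round one exactly when its number of $1$-expert neighbours beats its number of $0$-expert neighbours, and the expectation of this difference is a linear functional $g_X(\vec m)$ (coefficient $1$ for experts in the same clique or in $O$, and $p_{IJ}$, $p_{IP}$ or $p_{JP}$ for experts reached through a random bipartite graph). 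The crucial reduction is: whp, \emph{for every} strategy, the number of vertices whose actual label disagrees with the sign of $g_X(\vec m)$ (outside an $o(n)$ guard band) is $o(n)$, and in the iteration each $v\in D$ eventually copies the label of its unique $J$-neighbour. I would establish the reduction by a second-moment-plus-union-bound argument: the naive ``$|N(v)\cap S|\approx p_{IJ}|S|$ for all $S\subseteq I$'' is false (taking $S\subseteq N(v)$ forces a $\Theta(n)$ deviation), but $\sum_{v\in J}|N(v)\cap S|=\sum_{u\in S}|N(u)\cap J|$ and $\sum_{v\in J}|N(v)\cap S|^{2}$ are both controlled uniformly in $S$ because only $\le n$ vertices $u$ are involved, so the number of $v\in J$ that deviate by more than $\gamma n$ has small mean and, being a sum of independent indicators, is $o(n)$ with probability $1-e^{-\omega(n)}$, which beats the $2^{|I|}$ choices of $S$; the same for the other two bipartite graphs and for the $D$--$J$ matching.

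With the reduction in hand it remains to solve a deterministic optimisation. Using $p_{IJ}=p_{JP}=\tfrac{1/2-\delta}{1/2+\delta}+\varepsilon_1$ and $p_{IP}=\tfrac{1/2-\delta}{1/2+\delta}-\varepsilon_2$, a direct computation (put all $1$-experts into $I\cup P\cup D$ and all $0$-experts into $J\cup O$) yields $\min_{\vec m}g_J(\vec m)\ge\bigl(\mu\varepsilon_1(1/2+\delta)-p_{IJ}d\bigr)n$, which is $\Theta(n)>0$ once $d$ is small; together with $d<2\delta\mu$ (so the global $1$-expert surplus also forces the non-experts of $O$ to $1$) this shows that whp, for \emph{every} strategy, all non-experts of $J$ and of $O$ are labelled $1$. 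Hence the only $0$-labelled vertices are: the $(1/2-\delta)\mu n$ zero-experts; the non-experts of $I$, only if $g_I(\vec m)<0$; the non-experts of $P$, only if $g_P(\vec m)<0$; and the $D$-vertices matched to a $0$-expert of $J$, at most $\min(b_0,dn)$ of them. Substituting $|I\cup J|=|O\cup P|=(1-d)n/2$ and $a_1+b_1+c_1+e_1+f_1=(1/2+\delta)\mu n$ turns ``at least $n/2$ zeros'' into a linear feasibility problem in $\vec m$; the tight adversarial strategies are the canonical one $\E_1=I$, $\E_0=O$ (giving $(1-d)n/2$ zeros) perturbed by ``$D$-double-dipping'' (moving $0$-experts from $O$ onto the $J$-vertices matched into $D$, which is limited because it drives $g_P$ up towards $0$) and by ``overstuffing $D$ with $1$-experts'' (limited because it drives $g_J$ down towards $0$). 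Choosing $\varepsilon_1,\varepsilon_2$ small with $\varepsilon_2$ small compared to $\varepsilon_1$, and then $d$ between the double-dipping threshold (of order $\mu\varepsilon_2$) and a cap of order $\min(\mu\varepsilon_1,\,\delta\mu,\,1-\mu-2\delta\mu)$ — in particular below $(1-\mu-2\delta\mu)/3$ so that $|J|\ge|D|$ and the $D$-matching exists — makes every such strategy yield fewer than $n/2$ zeros; the role of $1/6<\delta$ (equivalently $|\E_1|>2|\E_0|$) is to keep this window and the remaining constraints jointly non-empty.

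I expect the main obstacle to be exactly the uniform-over-all-strategies concentration in the reduction step — proving that the adversary's freedom to correlate its expert set with the random bipartite graphs gains it only $o(n)$ extra zeros — and, to a lesser extent, the bookkeeping in the final optimisation, where in the branch ``$g_I<0$ and $g_P<0$ simultaneously'' the leading-order terms cancel and the verdict is decided by the lower-order $\varepsilon_i$- and $d$-corrections.
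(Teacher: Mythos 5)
Your overall strategy is the same as the paper's: the non-iterative half (take $\E_1=I$, $\E_0=O$, show $P$ goes to `0', let the coin flips in $D$ kill the majority with probability about $1/2$) is exactly the paper's argument, and your uniform-concentration reduction (for every large $S$ all but a negligible exceptional set of vertices see about $p|S|$ neighbours in $S$, with the union bound over the $\le 2^n$ choices of $S$ beaten by the independence of the per-vertex deviation events) is in substance the paper's Lemma~\ref{exp_lem}. Your structural picture of the iterative case is also the paper's: $O$ and $J$ are always labelled `1' (your bound $\min g_J\ge(\mu\varepsilon_1(1/2+\delta)-p_{IJ}d)n$ is the sharp form of the paper's estimate), $D$ copies its $J$-neighbour in round two, and the dangerous strategies are the canonical one plus moving `0'-experts onto matched $J$-vertices and dumping experts into $D$, with the hierarchy $\varepsilon_2\ll d\ll\varepsilon_1$.

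The genuine gap is that the heart of the robustness proof --- the sweep over \emph{all} strategies --- is asserted, not carried out: you state that the canonical strategy and the two perturbations are the tight ones, and you explicitly leave open the branch ``$g_I<0$ and $g_P<0$ simultaneously'', predicting a delicate cancellation decided by lower-order terms. In the paper that branch is simply impossible: making both deficits large forces $(i_1-i_0)+(j_1-j_0)$ and $(j_1-j_0)+(p_1-p_0)$ to be $\ge((1/2+\delta)\mu-\varepsilon_1)n$, hence $j_1-j_0\ge(\delta\mu-\varepsilon_1)n$, and the term $\varepsilon_1(j_1-j_0)$ then makes $\Delta(v)>0$ on $I$ after all; so at most one of $I,P$ can ever be `0'-labelled, and no fine cancellation analysis is needed. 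Relatedly, your account of where $\delta>1/6$ enters ("keeps the window non-empty") misses its actual role: in the paper's second case ($j_0+d_0\ge(d/2-\varepsilon_2)n$, $d_1<(d/2+\varepsilon_2)n$) the `0'-experts wasted in $J\cup D$ raise $\Delta(v)$ on $P$ by about $2\delta/(1/2+\delta)$ each while `1'-experts stuffed into $D$ lower it by about $(1/2-\delta)/(1/2+\delta)$ each, and the net coefficient $4\delta/(1/2+\delta)-1\propto 3\delta-1/2$ must be positive --- this combined double-dipping-plus-stuffing strategy is the binding one, and it is not covered by your two separately-limited perturbations. Until this case analysis (the ``never both $I$ and $P$'' claim and the two-case count in the second round) is actually executed, the robustness half of the theorem is not proved.
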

Note that $\delta>1/6$ is a necessary constraint for our construction, but we are certain that there is an example for smaller $\delta$ as well.
Permissible values in Theorem \ref{conjecture} are, e.g.~$\mu=\delta=1/5,\ \varepsilon_1=10^{-2},\ d=10^{-4}$ and $\varepsilon_2=10^{-6}.$ 
The remainder of this paper will consist of the proof of Theorem~\ref{conjecture}. We  first state and prove a well known description of the edge distribution of random graphs and then show the claimed (non-) robustness. 

\section{Proof}
For a graph $G=(V,E)$ let $N(v)=\{w\in V\mid (v,w)\in E\}$ be the set of neighbours of $v$. We begin with a statement about the distribution of edges in random graphs.
\begin{lemma}\label{exp_lem_2}\label{exp_lem}
Let $\varepsilon>0$. The Erd\H{o}s-R\'enyi random graph $G(n,p)$  with vertex set $V$ and  $p\ge \varepsilon$ has whp the following property. For any set $S\subseteq V$ of size $|S|\ge \varepsilon n$
there is a set $X_S\subset V\setminus S$ of size 
at most ${4\varepsilon^{-3}(\ln \varepsilon^{-1} +2)}$ 
such that
$$\forall v\in (V\setminus S)\setminus X_S: \big ||N(v)\cap S|-p|S|\big| \le \varepsilon p|S|.$$
\end{lemma}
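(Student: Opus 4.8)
The plan is to let $X_S$ be exactly the set of \emph{exceptional} vertices $v\in V\setminus S$, that is, those for which $\big||N(v)\cap S|-p|S|\big|>\varepsilon p|S|$, and to prove that with high probability \emph{every} set $S$ with $|S|\ge\varepsilon n$ has at most $4\varepsilon^{-3}(\ln\varepsilon^{-1}+2)$ exceptional vertices. We may assume $\varepsilon<1$, since otherwise $p=1$, $S=V$ and the statement is vacuous. Fix once and for all $k:=\lfloor 4\varepsilon^{-3}(\ln\varepsilon^{-1}+2)\rfloor+1$; it then suffices to show that with high probability there is no pair $(S,T)$ with $|S|\ge\varepsilon n$, $T\subseteq V\setminus S$, $|T|=k$, and every vertex of $T$ exceptional for $S$.

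The two ingredients are concentration and independence. First, for a fixed $S$ with $|S|=s$ and a fixed $v\in V\setminus S$, the quantity $|N(v)\cap S|$ is a sum of $s$ independent $\mathrm{Bernoulli}(p)$ variables with mean $ps$, so the multiplicative Chernoff bound gives $\Pr\big[\,v\text{ exceptional for }S\,\big]\le 2\exp(-\varepsilon^2 ps/3)$. Second, for distinct $v,v'\in V\setminus S$ the two events ``$v$ exceptional for $S$'' and ``$v'$ exceptional for $S$'' are determined by the disjoint edge sets $\{vw:w\in S\}$ and $\{v'w:w\in S\}$, hence are independent; more generally the corresponding events over the $k$ vertices of any fixed $T\subseteq V\setminus S$ are mutually independent. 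Consequently, for each fixed $S$ and $T$, the probability that all of $T$ is exceptional for $S$ is at most $\big(2\exp(-\varepsilon^2 ps/3)\big)^{k}$.

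Next I would take a union bound, organised by the size $s$ of $S$. Using $\binom{n}{s}\le (en/s)^s\le\exp\!\big(s(1+\ln\varepsilon^{-1})\big)$ (valid because $s\ge\varepsilon n$), $\binom{n-s}{k}\le n^k$, and $p\ge\varepsilon$, the probability that a bad pair $(S,T)$ exists is at most
$$\sum_{s=\lceil\varepsilon n\rceil}^{n}\exp\!\big(s(1+\ln\varepsilon^{-1})\big)\,n^k\,2^k\exp\!\big(-k\varepsilon^3 s/3\big)\;\le\; 2^k n^{k+1}\exp\!\Big(-\tfrac{5}{3}\varepsilon n\Big),$$
where the last inequality uses $s\ge\varepsilon n$ together with the fact that $k$ was chosen so that $k\varepsilon^3/3-(1+\ln\varepsilon^{-1})\ge\tfrac53$ (indeed $k\varepsilon^3/3>\tfrac43(\ln\varepsilon^{-1}+2)$, and $\tfrac43(\ln\varepsilon^{-1}+2)-(1+\ln\varepsilon^{-1})=\tfrac13\ln\varepsilon^{-1}+\tfrac53>\tfrac53$ since $\varepsilon<1$). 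As $k$ depends only on $\varepsilon$, the right-hand side tends to $0$ with $n$. On the complementary event, every $S$ with $|S|\ge\varepsilon n$ has at most $k-1\le 4\varepsilon^{-3}(\ln\varepsilon^{-1}+2)$ exceptional vertices, and taking $X_S$ to be precisely this set finishes the proof.

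The step I expect to be the crux is the balancing in the union bound: a naive estimate $\binom{n}{s}\le 2^n$ together with $s\ge\varepsilon n$ and $p\ge\varepsilon$ would only force $k=\Theta(\varepsilon^{-4})$, which is weaker than the claimed bound once $\varepsilon$ is small. The point is that the number $\binom{n}{s}$ of choices for $S$ is close to $2^n$ only when $s$ is of order $n$, and in that regime the Chernoff exponent $\varepsilon^2 ps$ is correspondingly large; the estimate $\binom{n}{s}\le(en/s)^s$ makes this trade-off explicit and yields the stated $\Theta(\varepsilon^{-3}\ln\varepsilon^{-1})$ size for $X_S$. The remaining computations are routine.
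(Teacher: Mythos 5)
Your proposal is correct and takes essentially the same route as the paper's proof: a Chernoff bound for each fixed vertex, independence of the events across distinct vertices outside $S$, and a union bound over $S$ using $\binom{n}{s}\le (en/s)^s$ together with $p\ge\varepsilon$ and $s\ge\varepsilon n$ to get the $\Theta(\varepsilon^{-3}\ln\varepsilon^{-1})$ threshold. The only (harmless) difference is that you additionally union over the choice of the $k$ exceptional vertices, picking up an $n^k$ factor that the constant $k=k(\varepsilon)$ absorbs, a step the paper's write-up elides.
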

Similar versions of Lemma \ref{exp_lem} with (somehow) different bounds exist in the literature, see for example \cite[Lem.~IV.1 and IV.3]{Fountoulakis2010}. However, as we did not find the exact statement we will need in the literature we include a proof. We will utilize the following Chernoff bound.
\begin{theorem}[\cite{Arora2009}, Cor 7.11]\label{Chernoff}
Let $X$ be a binomially distributed random variable. Then
$${P}\Big(|X-\mathbb{E}[X]|>\delta\mathbb{E}[X]\Big)\le 2\exp\left(-\min\{\delta^2,\delta\}\mathbb{E}[X]/4\right ), \qquad \delta > 0.$$
\end{theorem}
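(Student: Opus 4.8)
The statement is a standard Chernoff--Hoeffding tail estimate, and I would prove it directly by the exponential-moment method rather than treat \cite{Arora2009} as a black box. Write $X=\sum_{i=1}^{m}X_i$ as a sum of independent Bernoulli variables with $P(X_i=1)=p_i$, and set $\mu=\mathbb{E}[X]=\sum_i p_i$. The plan is to bound the upper deviation $P(X>(1+\delta)\mu)$ and the lower deviation $P(X<(1-\delta)\mu)$ separately and then combine them by a union bound, since
\[
\{|X-\mu|>\delta\mu\}=\{X>(1+\delta)\mu\}\cup\{X<(1-\delta)\mu\}.
\]
This union is exactly what produces the factor $2$ in front of the exponential.

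For the upper tail I would fix $t>0$ and apply Markov's inequality to $e^{tX}$. Using independence together with the elementary inequality $1+x\le e^{x}$ gives
\[
\mathbb{E}[e^{tX}]=\prod_{i=1}^{m}\bigl(1+p_i(e^{t}-1)\bigr)\le \exp\!\bigl(\mu(e^{t}-1)\bigr),
\]
so that $P(X>(1+\delta)\mu)\le \exp\!\bigl(\mu(e^{t}-1)-t(1+\delta)\mu\bigr)$. Choosing the optimal $t=\ln(1+\delta)$ yields the classical form
\[
P\bigl(X>(1+\delta)\mu\bigr)\le\Bigl(\tfrac{e^{\delta}}{(1+\delta)^{1+\delta}}\Bigr)^{\mu}
=\exp\!\bigl(-\mu\,[(1+\delta)\ln(1+\delta)-\delta]\bigr).
\]
It then remains to verify the scalar inequality $(1+\delta)\ln(1+\delta)-\delta\ge \min\{\delta^2,\delta\}/4$ for all $\delta>0$. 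For the lower tail I would distinguish two regimes. If $\delta\ge1$ then $(1-\delta)\mu\le0\le X$, so the event $X<(1-\delta)\mu$ is empty and only the upper tail contributes; since $\min\{\delta^2,\delta\}=\delta$ there, the upper bound alone gives the claim. If $0<\delta<1$, I repeat the argument with $t<0$ (equivalently apply Markov to $e^{-sX}$ with $s>0$) and optimise at $s=-\ln(1-\delta)$ to obtain
\[
P\bigl(X<(1-\delta)\mu\bigr)\le\exp\!\bigl(-\mu\,[(1-\delta)\ln(1-\delta)+\delta]\bigr),
\]
reducing matters to $(1-\delta)\ln(1-\delta)+\delta\ge \delta^2/4$. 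In this regime $\min\{\delta^2,\delta\}=\delta^2$, so once both scalar inequalities hold each tail is at most $\exp(-\min\{\delta^2,\delta\}\mu/4)$, and the union bound delivers the factor $2$.

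The only real work, and where I expect the constant $1/4$ to be pinned down, is the two one-variable inequalities. For the lower tail, a second-order Taylor expansion of $g(\delta)=(1-\delta)\ln(1-\delta)+\delta$ (with $g(0)=g'(0)=0$ and $g''(\delta)=1/(1-\delta)\ge1$) gives $g(\delta)\ge \delta^2/2\ge\delta^2/4$. For the upper tail I would split at $\delta=1$: for $0<\delta<1$ the standard estimate $(1+\delta)\ln(1+\delta)-\delta\ge \delta^2/(2+\delta)\ge\delta^2/3$ suffices, while for $\delta\ge1$ one checks that $h(\delta)=(1+\delta)\ln(1+\delta)-\tfrac54\delta$ satisfies $h(1)=2\ln2-\tfrac54>0$ and $h'(\delta)=\ln(1+\delta)-\tfrac14>0$, whence $h\ge0$ and therefore $(1+\delta)\ln(1+\delta)-\delta\ge\delta/4$. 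With these elementary bounds verified, combining the two tails completes the proof.
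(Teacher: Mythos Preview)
Your proof is correct and complete: the exponential-moment derivation of the two one-sided tails is the standard one, and the scalar inequalities you isolate do hold with the constants you state (in particular $2\ln 2>5/4$ and $\ln 2>1/4$ make the $\delta\ge1$ branch go through, while the Taylor argument for $g$ and the bound $(1+\delta)\ln(1+\delta)-\delta\ge\delta^2/(2+\delta)$ handle $0<\delta<1$).

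The comparison with the paper is trivial in this case: the paper does not prove Theorem~\ref{Chernoff} at all. It is stated as a quotation of \cite[Cor.~7.11]{Arora2009} and used as a black box in the proofs of Lemma~\ref{exp_lem_2} and Lemma~\ref{lemma_non_robust}. So your write-up supplies strictly more than the paper does here; there is no competing approach to weigh it against.
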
\begin{proof}[Proof of Lemma \ref{exp_lem_2}]
Let $S\subseteq V, |S|\ge \varepsilon n$ and let 
$$X_S=\big\{v\in V\setminus S\bigm\vert \big ||N(v)\cap S|-p|S|\big| > \varepsilon p|S|\big\}$$
be the set of vertices not satisfying the claim of the lemma. The number of neighbours of any vertex $v\in V\setminus S$ is a binomially distributed random variable, $|N(v)\cap S|\ =\text{Bin}(|S|,p)$, and the expected number of neighbours of $v$ in $S$ is $p|S|$. Thus the probability of $v\in X_S$ can be bounded with Theorem \ref{Chernoff} by
$${P}\big (\big||N(v)\cap S|-p|S|\big| > \varepsilon p|S|\big )\le \exp \left (- \varepsilon^2 p|S|/4\right ). $$
Let furthermore $t\in \mathbb{N}$; the  probability that $t$ distinct vertices are in $X_S$ is at most $\exp(-\varepsilon^2p|S|/4\cdot t)$ as the events of vertices being elements of $X_S$ are independent. There are $\binom{n}{k}\le \left (\frac{en}{k}\right )^k$ possibilities to choose $S$, a set of size $k\ge \varepsilon n$. Hence the probability that for fixed $k\ge \varepsilon n$ there is a set $S, |S|= k $ such that $|X_S|=t$ is by union bound at most \begin{align*}
\exp\left (k\ln (en/k)-\frac{\varepsilon^2pk}{4}\cdot t\right )\le \exp\left (k\left (-\ln \varepsilon +1-\frac{\varepsilon^3}{4}\cdot t\right )\right ),
\end{align*}
where we used the assumption that $p\ge \varepsilon$. Thus, if $t\ge{4{\varepsilon^{-3}}(\ln \varepsilon^{-1} +2)}$ this expression is $\le e^{-k}$ and summing over $k\ge \varepsilon n$ yields the claim.
\end{proof}

This concludes the preparations. Next we prove the main theorem, by proving the two claims separately. We show the robustness of $G$ against the iterative strong adversary first.
\begin{lemma}\label{lemma_robust}
For all $0<\mu<1/2$ and $1/6<\delta<1/2$ there are values $\varepsilon_1,\,\varepsilon_2,\, d>0$ such that $G$ is whp robust against the iterative strong adversary.
\end{lemma}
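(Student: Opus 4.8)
The plan is to show that, no matter how the strong adversary partitions his budget of $\mu n$ experts into $\E_1$ and $\E_0$ (with $|\E_1|=(1/2+\delta)|\E|$ and $|\E_0|=(1/2-\delta)|\E|$), the iterative dissemination process ends with strictly more than $n/2$ vertices labeled `1'. The key structural observation is that $D$ is an independent set whose vertices each have a single neighbour in $J$; hence in the iterative model the label of every vertex in $D$ is copied from its neighbour in $J$, and so in the final configuration the vertices of $D$ contribute to the `1'-count \emph{exactly} as many as the `1'-labeled vertices of $J$ (in expectation, and whp up to lower-order terms, since distinct $D$-vertices have distinct $J$-neighbours). Therefore it suffices to show that whp, after dissemination, strictly more than half of the vertices in $I\cup J\cup O\cup P$ are labeled `1', and moreover that $J$ retains a comfortable `1'-majority so that the extra mass coming from $D$ cannot be flipped against us.

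The next step is to reduce the analysis to counting. First I would apply Lemma~\ref{exp_lem} to the three random bipartite graphs ($I$--$J$, $I$--$P$, $J$--$P$) with $\varepsilon$ a suitable constant smaller than all of $\varepsilon_1,\varepsilon_2$ and $\mu,\delta$-dependent constants: whp, for every subset $S$ of the relevant side that is linearly large, all but $O(1)$ vertices on the other side see $(1\pm\varepsilon)p|S|$ neighbours in $S$. The exceptional sets $X_S$ have constant size, so they affect vertex counts only by an additive constant and can be absorbed into the lower-order error throughout. With this in hand, for a given choice of $\E_1,\E_0$ write $i_1=|\E_1\cap I|$, $o_1=|\E_1\cap O|$, etc., so that the adversary's choice is captured by finitely many ratios; since $I,J,O,P$ are cliques and $O$ is joined to everything but $D$, the number of `1'- versus `0'-expert-neighbours of a generic vertex in each part is, up to $\varepsilon$-multiplicative and $O(1)$-additive errors, an explicit affine function of those ratios. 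The upshot is that the final `1'-count of $G$ is, whp, $\tfrac n2$ plus a lower-order-controlled quantity plus a continuous function of a bounded number of real parameters ranging over a compact polytope determined by the budget constraints; one then checks that this function is strictly positive on that polytope, which reduces the whole lemma to a finite case analysis / optimization.

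The case analysis is where the constraint $\delta>1/6$ and the precise values of $p_{IJ},p_{JP},p_{IP}$ enter, and it is the step I expect to be the main obstacle. The adversary's most threatening strategy is the one sketched in the introduction — dump `0'-experts so as to turn $O\cup P$ (and as much of the rest as possible) to `0' while spending as little `1'-mass as possible — but there is real tension: because every vertex of $O$ is adjacent to $I,J,P$ and to all of $O$, and because $D$ merely mirrors $J$, flipping $P$ to `0' costs the adversary influence he would rather spend flipping $J$, and conversely flipping $J$ to `0' drags $D$ along but leaves $P$ harder to control. Concretely, I would: (i) argue that it is never advantageous for the adversary to place experts in $J$, $P$ or $D$ rather than in $I\cup O$ (an exchange/monotonicity argument, using that $J,P,D$-vertices are "downstream"), reducing to $\E_1\subseteq I\cup O$, $\E_0\subseteq I\cup O$; (ii) within that reduced family, parametrize by the amount of `0'-mass placed in $I$ versus $O$ and compute, using the chosen $p$'s and $\varepsilon_1,\varepsilon_2$, the resulting labels of $J$ and $P$; (iii) verify that in every sub-case the `1'-count of $I\cup J\cup O\cup P$ exceeds the `0'-count by a linear amount, with enough slack in $J$ that adding the $D$-contribution (which, since $D$ copies $J$ and $|D|=dn$ is tiny, can only help or hurt by $O(dn)$) keeps us strictly above $n/2$; choosing $d$ small last makes this robust. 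Closing the $\delta>1/6$ case is the crux because it is exactly the threshold at which the worst-case adversarial split stops being able to overwhelm the built-in `1'-bias of $I$ together with the $I$--$J$, $J$--$P$ edge densities; the bookkeeping of the several overlapping linear constraints there is the part that requires care rather than cleverness.
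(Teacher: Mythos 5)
Your overall framework (apply Lemma~\ref{exp_lem} to the three random bipartite pieces, reduce the first round to linear bookkeeping in $i_1,j_1,\dots,d_0$, then argue about how $D$ mirrors $J$ in the second round) matches the paper's proof in spirit, but the two steps you single out as the plan's core contain genuine gaps. The decisive one is your step (i): the exchange/monotonicity claim that the adversary never benefits from placing experts in $J$, $P$ or $D$ is not only unproven, it is false. Take $\E_1=I$ and move a `0'-expert from $O$ onto a $J$-vertex that carries a $D$-pendant: the vacated $O$-vertex becomes a non-expert and (as the first-round computation shows) is labeled `1', but the adversary gains \emph{two} `0'-vertices, the expert itself and its pendant in $D$. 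So such placements strictly improve the adversary's count over any $\E\subseteq I\cup O$ placement; they fail to win only because, once the mass placed in $J\cup D$ exceeds roughly $dn/2$, the sign of $\Delta(v)$ for $v\in P$ flips and the adversary loses all of $P$. Ruling this out is exactly the content of the paper's dichotomy (``$j_0+d_0<(d/2-\varepsilon_2)n$, so all but a corresponding fraction of $D$ inherits `1' from $J$'' versus ``$j_0+d_0\ge(d/2-\varepsilon_2)n$, so $P$ is labeled `1' ''), and it is where $\delta>1/6$ and the requirement $\varepsilon_2\ll d\ll\varepsilon_1$ actually enter. By black-boxing this into an exchange argument you have deferred, not solved, the crux of the lemma.

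The second gap is your step (iii)/first-paragraph reduction: it is not true that one can show ``strictly more than half of $I\cup J\cup O\cup P$ is labeled `1' with slack absorbing the $O(dn)$ contribution of $D$.'' In the critical configuration $\E_1=I$, $\E_0=O$, the sets $I\cup J$ (all `1') and $O\cup P$ (all `0') have identical size $(1-d)n/2$, so the non-$D$ part is exactly tied and the entire winning margin, $dn/2$, comes from $D$ itself; choosing $d$ small ``last'' makes this worse, not more robust. The argument must therefore establish positively that (outside the large-$j_0+d_0$ regime) almost every vertex of $D$ copies a `1'-labeled neighbour in $J$, and track the erosion of the $dn/2$ margin caused by `0'-experts sitting in $J\cup D$ --- which again is the case analysis above. (A small further inaccuracy: $D$ does not contribute ``exactly as many `1's as the `1'-labeled vertices of $J$''; it copies the labels of the specific $dn$ vertices of $J$ attached to it, up to experts inside $D$, which is what the bound $j_0+d_0$ in the paper accounts for.)
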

\begin{proof}
Let $0<\mu<1/2,\ 1/6<\delta<1/2$ and $\varepsilon_1, \varepsilon_{2},\ d>0$ such that 
\begin{align}\label{ass:e1}
\varepsilon_1<\min\left \{\frac{\delta\mu}2, \frac{4\delta}{1/2+\delta}-1\right\}
\end{align} and furthermore
\begin{align}\label{ass:d}
d<\min\left \{\frac{\varepsilon_1\delta}{1/2+\delta},\frac{\varepsilon_1\delta\mu}4 ,\frac{1-\mu -2\delta\mu}{3}\right \}
\end{align}
as well as
\begin{align}\label{ass:e2}
\varepsilon_2<\min \left\{\frac{d}{6}\left (\frac{4\delta}{1/2+\delta}-1-\varepsilon_1\right ),\frac{1/2-\delta}{1+2\delta}\right\}.
\end{align}
We will show that for any choice of experts, at the end of the dissemination the majority will be labeled '1' thus proving robustness. Let therefore $\E=\E_1\cup\E_0$ be any set of experts as chosen by the iterative strong adversary and define
$$i_1:=|I\cap \mathcal{E}_{1}|, \quad j_{1}:=|J\cap \mathcal{E}_{1}|, \quad o_{1}:=|O\cap \mathcal{E}_{1}|, \quad p_1:=|P\cap \mathcal{E}_{1}|, \quad d_1:=|D\cap \mathcal{E}_{1}|$$
as well as
$$i_0:=|I\cap \mathcal{E}_{0}|, \quad j_{0}:=|J\cap \mathcal{E}_{0}|, \quad o_{0}:=|O\cap \mathcal{E}_{0}|, \quad p_0:=|P\cap \mathcal{E}_{0}|, \quad d_0:=|D\cap \mathcal{E}_{0}|.$$
 By definition of the model we have that $|\mathcal{E}_{1}|=\left( {1}/{2}+\delta\right )\mu n$ as well as $|\mathcal{E}_{0}|=\left( {1}/{2}-\delta\right )\mu n$ and therefore
\begin{align}\label{eq:1}
i_1+j_1+o_1+p_1+d_1=\left(\frac{1}{2}+\delta\right )\mu n\quad\text{and}\quad i_0+j_0+o_0+p_0+d_0=\left(\frac{1}{2}-\delta\right )\mu n,
\end{align}
which readily implies that
\begin{align}\label{eq:2}
2\delta\mu n=\frac{2\delta}{1/2+\delta}(i_1+j_1+p_1+o_1+d_1).
\end{align}
We will see that the iterative dissemination will be finished after two rounds only. We start by determining the label of each vertex in the different components after the first round of dissemination.
This is decided by the  difference in `0'/`1' labeled expert neighbours. Consider the difference $$
    \Delta(v):=|N(v)\cap \mathcal{E}_{1}|-|N(v)\cap \mathcal{E}_{0}|, \quad v\in V.
$$ 
In particular, $\Delta(v)> 0$ means that $v\in V\setminus (\mathcal{E}_1\cup \mathcal{E}_0)$ will be labeled `1' and $\Delta(v)<0$ means it will be labeled `0'. Note that vertices $v$ with $\Delta(v)=0$ could be either labeled randomly (if they have the same positive number of '0'/'1' labeled neighbours) or not at all in this round. 
 
We begin with a vertex $v\in O$. Using the construction of $G$ and \eqref{eq:1} we get
\begin{align*}
|N(v)\cap \mathcal{E}_{1}|&=i_1+ j_1+ p_1+o_1=\left (\frac{1}{2}+\delta\right )\mu n-d_1
\end{align*}
and similarly 
\begin{align*}
|N(v)\cap\mathcal{E}_{0}|&=i_0+j_0+p_0+o_0=\left (\frac{1}{2}-\delta\right )\mu n-d_0.
\end{align*}
Combining these two equations, $d<\varepsilon_1<\delta\mu/2$ given by \eqref{ass:d} and \eqref{ass:e1} implies
$$\Delta(v)=2\delta\mu n-(d_1-d_0)>0\qquad \forall~v\in O.$$
We continue with $v\in J$. Using Lemma \ref{exp_lem}, we get that for all $\varepsilon>0$ whp there is $J_P\subset J,~ |J_P|\le  4\varepsilon^{-3}(\ln \varepsilon^{-1} +2)$ such that 
$$\big||N(v)\cap P\cap \E_1|-p_{JP}\cdot p_1\big|\le\varepsilon\cdot  p_{JP}\cdot p_1+ \varepsilon n \quad \text{for all } v\in J\setminus J_P.$$
As $\varepsilon>0$ is arbitrary we infer that 
$$|N(v)\cap P\cap \E_1|=p_{JP}\cdot p_1+o(n)\quad \text{for all } v\in J\setminus J_P.$$
Completely analogous calculations for $I$ and $\E_0$ yield that whp there is $J'\subset J,~ |J'|~=o(n)$ such that for all $v\in J\setminus J'$ 
\begin{align*}
|N(v)\cap \mathcal{E}_{1}|&=p_{IJ}\cdot i_1+ j_1+ p_{JP}\cdot p_1+o_1+o(n)\\&=
\left (\frac{1}{2}+\delta\right )\mu n-(1-p_{IJ})i_1-(1- p_{JP})p_1-d_1+o(n)
\end{align*}
and
\begin{align*}
|N(v)\cap\mathcal{E}_{0}|&=p_{IJ}\cdot i_0+ j_0+ p_{JP}\cdot p_0+o_0+o(n)\\&=
\left (\frac{1}{2}-\delta\right )\mu n-(1-p_{IJ})i_0-(1- p_{JP})p_0-d_0+o(n).
\end{align*}
Computing the difference of the above expressions we get for all $v\in J\setminus J'$
\begin{align*}
\Delta(v)&=2\delta\mu n-(1-p_{IJ})(i_1-i_0)-(1- p_{JP})(p_1-p_0)-(d_1-d_0)+o(n)\\
&=2\delta\mu n-\left (\frac{2\delta}{1/2+\delta}-\varepsilon_1\right )\Big ((i_1-i_0)+(p_1-p_0)\Big )-(d_1-d_0)+o(n)\\
&= 2\delta\mu n-\frac{2\delta}{1/2+\delta}\Big ((i_1-i_0)+(p_1-p_0)\Big )+\varepsilon_1\Big ((i_1-i_0)+(p_1-p_0)\Big )-(d_1-d_0)+o(n).
\end{align*}
Applying \eqref{eq:2} and \eqref{eq:1} we can obtain a lower bound for $\Delta(v),v\in J\setminus J'$
\begin{align*}
\Delta(v)&\ge \frac{2\delta}{1/2+\delta}(j_1+o_1+d_1+i_0+p_0)+\varepsilon_1\Big ((i_1-i_0)+(p_1-p_0)\Big )-d_1+o(n)\\
&\ge \varepsilon_1(i_1+j_1+p_1+o_1+d_1)+\left ( \frac{2\delta}{1/2+\delta}-\varepsilon_1\right )(i_0+p_0)-d_1.
\end{align*}
According to \eqref{ass:e1} and \eqref{ass:d} we have $\varepsilon_1<2\delta/(1/2+\delta)$ as well as $d<\varepsilon_1\delta\mu /4$ and therefore
\begin{align*}
\Delta(v)\ge \varepsilon_1\cdot 2\delta\mu n -dn>0\qquad \forall~ v\in J\setminus J'.
\end{align*}
Next we look at $v\in I.$ Using again Lemma \ref{exp_lem} and~\eqref{eq:1} we infer that whp there is $I'\subset I,~ |I'|~=o(n)$ such that for all $v\in I\setminus I'$
\begin{align*}
|N(v)\cap \mathcal{E}_{1}|&=i_1+p_{IJ}\cdot j_1+p_{IP}\cdot p_1+o_1+o(n)\\&=\left (\frac{1}{2}+\delta\right )\mu n-\left (1-p_{IJ}\right )j_1-\left (1-p_{IP}\right )p_1-d_1+o(n)
\end{align*}
and 
\begin{align*}
|N(v)\cap\mathcal{E}_{0}|&=i_0+p_{IJ}\cdot j_0+p_{IP}\cdot p_0+o_0+o(n)\\&=\left (\frac{1}{2}-\delta\right )\mu n-\left (1-p_{IJ}\right )j_0-\left (1-p_{IP}\right )p_0-d_0+o(n).
\end{align*}
By combining those bounds we obtain for all $v\in I\setminus I'$
\begin{equation}\label{eq:5}
\begin{aligned}
    \Delta(v)
    &=
    2\delta \mu n-\left (1-p_{IJ}\right )(j_1-j_0)-\left (1-p_{IP}\right )(p_1-p_0)-(d_1-d_0)+o(n) \\
    & \hspace{-6mm}=
    2\delta\mu n-\frac{2\delta}{1/2+\delta}\Big ((j_1-j_0)+(p_1-p_0)\Big )-(d_1-d_0)+\varepsilon_{1}(j_1-j_0)-\varepsilon_{2}(p_1-p_0)+o(n).
\end{aligned}
\end{equation}
Before we conclusively determine $\Delta(v)$ for $ v\in I\setminus I'$ we  look at vertices $v\in P$. Using once more Lemma~\ref{exp_lem}  and~\eqref{eq:1} we infer that whp there is $P'\subset P,~ |P'|~=o(n)$ such that for all $v\in P\setminus P'$
\begin{align*}
|N(v)\cap \mathcal{E}_{1}|&=p_{IP}\cdot i_1+p_{JP}\cdot j_1+ p_1+o_1+o(n)\\&=\left (\frac{1}{2}+\delta\right )\mu n-\left (1-p_{IP}\right )i_1-\left (1-p_{JP}\right )j_1-d_1+o(n)
\end{align*}
and 
\begin{align*}
|N(v)\cap\mathcal{E}_{0}|&=p_{IP}\cdot i_0+p_{JP}\cdot j_0+ p_0+o_0+o(n)\\&=\left (\frac{1}{2}-\delta\right )\mu n-\left (1-p_{IP}\right )i_0-\left (1-p_{JP}\right )j_0-d_0+o(n).
\end{align*}
Together these two expressions yield for all $v\in P\setminus P'$
\begin{align}\label{eq:4}
\Delta(v)&=  2\delta\mu n-\frac{2\delta}{1/2+\delta}\Big((i_1-i_0)+(j_1-j_0)\Big)-(d_1-d_0)+\varepsilon_{1}(j_1-j_0)-\varepsilon_{2}(i_1-i_0)+o(n).
\end{align}
We argue next, that either ``$\Delta(v)<0 \text{ for some }v\in I\setminus I'\,$'' or ``$\Delta(v)<0 \text{ for some }v\in P\setminus P'\,$'' but never both. To see this, observe that 
$$\text{``}\Delta(v)<0 \text{ for some }v\in I\setminus I'   \quad\text{and}\quad \Delta(v)<0 \text{ for some }v\in P\setminus P \, \text{''}$$ implies that 
\begin{align}\label{eq:3}
(i_1-i_0)+(j_1-j_0) \quad \text{and}\quad (j_1-j_0)+(p_1-p_0)  \quad \text{are both}\quad \ge((1/2+\delta)\mu-\varepsilon_1 )n.
\end{align}
Otherwise~\eqref{ass:d} and \eqref{ass:e2} assert  that $\varepsilon_2< d/6$ as well as $d< \varepsilon_1\delta/(1/2+\delta)$ and therefore either by \eqref{eq:5}
\begin{align*}
\Delta(v)&\ge  \frac{2\delta}{1/2+\delta}\varepsilon_1 n-dn-\varepsilon_{2}n>0,\qquad\text{for all }  v\in I\setminus I'
\end{align*}
or by \eqref{eq:4}
\begin{align*}
\Delta(v)&\ge  \frac{2\delta}{1/2+\delta}\varepsilon_1 n-dn-\varepsilon_{2}n>0,\qquad \text{for all } v\in P\setminus P'.
\end{align*}
However, as $(i_1-i_0)+(p_1-p_0)\le (1/2+\delta)\mu n$ we obtain from~\eqref{eq:3}  that $j_1-j_0\ge (\delta\mu -\varepsilon_1)n$. Then~\eqref{ass:e1},\eqref{ass:d} and \eqref{ass:e2} imply that $\varepsilon_1< \delta\mu/2, \, d<\varepsilon_1\delta\mu/$ and $\varepsilon_2< d/6$ and thus  \eqref{eq:5} yields
\begin{align*}
\Delta(v)&\ge  \varepsilon_1\cdot (\delta\mu-\varepsilon_1 )n-dn- \varepsilon_{2}n>0,\qquad \text{for all } v\in I\setminus I'.
\end{align*}
Summarizing, we have shown that $\Delta(v)>0$ for all $v\in (J\setminus J')\cup O$ and either $\Delta(v)>0$ for all $v\in I\setminus I'$ or $\Delta(v)>0$ for all $v\in P\setminus P'$.

In the rest of the proof we consider the second round of the iterative dissemination process. We will distinguish two cases. Assume first that $j_0+ d_0< (d/2-\varepsilon_2)n$. As $\Delta(v)>0$ for all $v\in J\setminus J'$ we infer that at most $(d/2-\varepsilon_2)n+o(n)$ vertices in $D$ will be labeled `0' after the second round of the  dissemination process, all other vertices in $D$ will be labeled `1'. Thus counting the total number of vertices labeled `1' after the process, we get in this case for $n$ large enough
\begin{align*}
\#(\text{vertices labeled `1'})&>|I\setminus I'|\ +\ |J\setminus J'|\ +\ |O|\ +\left (\frac{d}{2}+\varepsilon_2-o(1)\right) n-|\mathcal{E}_0|\\
&=(1-d)\frac{n}{2}+\frac{dn}{2}+\varepsilon_2n-o(n)>\frac{n}{2}.
\end{align*}
We are left with the case $j_0+d_0\ge (d/2-\varepsilon_2)n$. Observe that $d_1<(d/2+\varepsilon_2)n$ as otherwise the conclusion of the previous case applies. We revisit $\Delta(v),\ v\in P\setminus P'$ using \eqref{eq:4} and \eqref{eq:2} 
\begin{align*}
\Delta(v)&=\frac{2\delta}{1/2+\delta}(p_1+o_1+d_1+i_0+j_0)-(d_1-d_0)+\varepsilon_{1}(j_1-j_0)-\varepsilon_{2}(i_1-i_0)+o(n)\\
&\ge \left (\frac{2\delta}{1/2+\delta}-\varepsilon_1\right )j_0+d_0 - \left (1-\frac{2\delta}{1/2+\delta}\right ) d_1-\varepsilon_{2}i_1+o(n).
\end{align*}
Using the assumptions $j_0+d_0\ge (d/2-\varepsilon_2)n$ and $d_1<(d/2+\varepsilon_2)n$, this simplifies to 
\begin{align*}
\Delta(v)&> \left (\frac{4\delta}{1/2+\delta}-1-\varepsilon_1\right )dn/2-3\varepsilon_{2}n+o(n).
\end{align*}
Assumption \eqref{ass:e2} guarantees that $\Delta(v)>0,\  v\in P\setminus P'$ and thus in this case for $n$ large enough
\begin{align*}
\#( \text{vertices labeled '1'})&>|I\setminus I'|\ +\ |J\setminus J'|\ +\ |O|\ +\ |P\setminus P'|\ -\ |\mathcal{E}_0|\\
&=(1-d)n-\left (\frac{1}{2}-\delta\right )\mu n-o(n)>\frac{n}{2},
\end{align*}
and the proof is completed.
\end{proof}

The next lemma together with Lemma \ref{lemma_robust} implies Theorem \ref{conjecture}.
\begin{lemma}\label{lemma_non_robust}
For all $0<\mu<1/2$ and $1/6<\delta<1/2$ there are values $\varepsilon_1,\varepsilon_2, d>0$ such that whp $G$ is not robust against the non-iterative strong adversary.
\end{lemma}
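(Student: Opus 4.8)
The plan is to exhibit a single explicit choice of the adversary's experts, namely $\E_1=I$ and $\E_0=O$ (so $i_1=|I|=\mu(1/2+\delta)n$, $o_0=|O|=\mu(1/2-\delta)n$, and all other $i_0,j_*,o_1,p_*,d_*$ are zero), and show that with this choice, in the \emph{non-iterative} model, the falsehood wins with probability bounded away from $0$. Since robustness is a ``for all choices of experts, whp'' statement, producing one expert configuration under which the adversary succeeds with positive probability is enough to refute robustness; a short argument at the end upgrades ``positive probability'' to ``whp'' by observing that the only randomness that matters is the labelling of $D$, which can be handled directly.

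First I would run essentially the same first-round analysis as in the proof of Lemma~\ref{lemma_robust}, but specialised to this configuration. Using Lemma~\ref{exp_lem}, whp all but $o(n)$ vertices of $J$ have $\approx p_{IJ}|I|$ neighbours in $\E_1=I$ and $0$ neighbours in $\E_0=O$ (since $O$ is not adjacent to $J$), hence $\Delta(v)>0$ and they are labelled `1'; likewise all but $o(n)$ vertices of $P$ have $\approx p_{IP}|I|$ neighbours in $I$ and all of $O$ as neighbours, and since $p_{IP}|I|=\big((1/2-\delta)/(1/2+\delta)-\varepsilon_2\big)\mu(1/2+\delta)n = |O|-\varepsilon_2\mu(1/2+\delta)n$, we get $\Delta(v)\approx -\varepsilon_2\mu(1/2+\delta)n<0$, so they are labelled `0'. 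Every vertex of $O$ sees all of $\E_1\cup\E_0$ except $D$, giving $\Delta(v)=|I|-|O|=2\delta\mu n>0$, labelled `1' — but $O\subseteq\E_0$ so these contribute `0' to the count anyway; what matters is that $I$ and $O$ are themselves experts with fixed labels. The upshot: after the (single) round of dissemination, $I\cup J$ (minus an $o(n)$ exceptional set, which I fold into the error term) is labelled `1' and $O\cup P$ is labelled `0', and $|I\cup J|=|O\cup P|=(1-d)n/2$, so these two blocks exactly cancel up to $o(n)$.

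The decisive point is $D$: every vertex of $D$ has its unique neighbour in $J$, which is a non-expert, so in the \emph{non-iterative} model a vertex of $D$ has \emph{no expert neighbours at all} and is therefore labelled `1' or `0' independently with probability $1/2$ each. Thus the number of `0's in $D$ is $\mathrm{Bin}(dn,1/2)$, and with probability bounded below by a constant (in fact tending to $1/2$) it exceeds $dn/2 + \omega(\sqrt n)$, which dominates the $o(n)$ slack from the exceptional sets for $n$ large. On this event the total number of `0'-labelled vertices strictly exceeds $n/2$, so the adversary succeeds; since this happens with probability bounded away from $0$, $G$ is not whp robust against the non-iterative strong adversary. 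To be careful about the ``whp'' quantifier in the definition of robustness: the graph is random, but by Lemma~\ref{exp_lem} the edge-concentration property holds whp over the choice of $G$; conditionally on that property the above argument gives failure with constant probability over the label-randomness of $D$, which is exactly the negation of ``whp, for the chosen expert set, the majority is `1' ''. Finally I would note that the constraints \eqref{ass:e1}--\eqref{ass:e2} from Lemma~\ref{lemma_robust} are consistent (the example values $\mu=\delta=1/5$, $\varepsilon_1=10^{-2}$, $d=10^{-4}$, $\varepsilon_2=10^{-6}$ work), so the \emph{same} $\varepsilon_1,\varepsilon_2,d$ simultaneously satisfy both lemmas.

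The main obstacle is not any single calculation — the first-round analysis is a strict simplification of what was already done in Lemma~\ref{lemma_robust} — but rather being scrupulous about the logical structure of the ``robust'' definition: one must make sure that ``with positive probability the non-iterative adversary wins with this fixed expert choice'' genuinely contradicts ``whp, for every expert choice, `1' wins'', and that the two sources of randomness (the random graph $G$ versus the random tie-breaking/no-neighbour labelling of $D$) are disentangled correctly, with the $o(n)$ error terms from the exceptional sets $I',J',P'$ absorbed below the $\Theta(\sqrt{dn})$ fluctuation of the binomial on $D$. A secondary point to get right is bookkeeping the exact sizes so that $|I\cup J|-|O\cup P|=0$ exactly (not merely up to lower order), which is why the construction fixes $|I\cup J|=|O\cup P|=(1-d)n/2$; this ensures $D$ really is the tie-breaker.
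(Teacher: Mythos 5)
Your overall strategy coincides with the paper's: fix $\E_1=I$, $\E_0=O$, show that $O\cup P$ (of size exactly $(1-d)n/2$) ends up labelled `0', and let the uniformly random labels of $D$ tip the balance with probability about $1/2$, which contradicts robustness. However, the final accounting step is flawed as written. You allow exceptional sets of size $o(n)$ in $J$ and $P$ (coming from Lemma~\ref{exp_lem}) and then claim that the number of `0's in $D$, a $\mathrm{Bin}(dn,1/2)$ variable, exceeds $dn/2+\omega(\sqrt n)$ with probability tending to $1/2$ and that this excess ``dominates the $o(n)$ slack''. Both claims are false: the probability that a binomial exceeds its mean by $\omega(\sqrt n)$ tends to $0$ (its typical fluctuation is only $\Theta(\sqrt n)$), and a $\Theta(\sqrt n)$ fluctuation does not dominate a generic $o(n)$ term --- the exceptional set is a priori only known to be $o(n)$ and could be, say, of order $n/\log n$. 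Concretely, a `0' majority requires the number of `0's in $D$ to exceed $dn/2+|P'|$, and with $|P'|$ only bounded by $o(n)$ this event may have vanishing probability, so the proof as stated does not go through.

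The repair is exactly what the paper does, and it also simplifies your write-up: since the expert set is a \emph{fixed} choice here, you do not need Lemma~\ref{exp_lem} and its exceptional sets at all. For each $v\in P$ the count $|N(v)\cap I|$ is $\mathrm{Bin}(|I|,p_{IP})$, Theorem~\ref{Chernoff} gives an exponentially small deviation probability, and a union bound over the at most $n$ vertices of $P$ shows that whp \emph{every} $v\in P$ satisfies $\Delta(v)\le (p_{IP}+o(1))|\E_1|-|\E_0|=-(\varepsilon_2+o(1))|\E_1|<0$ (using $\varepsilon_2<(1/2-\delta)/(1+2\delta)$ from \eqref{ass:e2}). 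Hence the `0'-labelled vertices contain all of $O\cup P$, and it suffices that $D$ contains at least $dn/2+1$ zeros, which happens with probability tending to $1/2$ --- bounded away from $0$, as required. (Alternatively, applying Lemma~\ref{exp_lem} with a \emph{fixed} small $\varepsilon$ yields an exceptional set of constant size, which the $\Theta(\sqrt n)$ binomial fluctuation does beat; what is not allowed is letting $\varepsilon\to 0$ and keeping only the bound $o(n)$.) Note also that the analysis of $J$, $O$ and the exceptional set $J'$ is superfluous on this side of the count: only the `0' side and $D$ matter, which is why the paper's proof is so short. Your remaining points --- that a constant failure probability for one expert choice negates the ``whp for all choices'' definition of robustness, the separation of graph randomness from label randomness, and the consistency of \eqref{ass:e1}--\eqref{ass:e2} --- are correct and match the paper.
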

\begin{proof}
Let  $0<\mu<1/2$ and $1/6<\delta<1/2$ and $\varepsilon_1,\, \varepsilon_2,\, d>0$ as given in \eqref{ass:e1} to \eqref{ass:e2}. 
We show that $G$ is indeed not robust by giving a suitable choice of the expert set. Set $\E=\E_1\cup\E_0$ with $\mathcal{E}_{1}=I$ and $\mathcal{E}_{0}=O$. By definition, these sets have matching cardinalities. 
We compute the quantity $$\Delta(v)=|N(v)\cap \E_1|-|N(v)\cap \E_2|$$ for vertices $v\in P$ to find their labels.
Using $\varepsilon_2< (1/2-\delta)/(1+2\delta)$ by \eqref{ass:e2} and Theorem \ref{Chernoff} we readily obtain that whp for all $v\in P$
\begin{align*}
\Delta(v)\le (p_{IP}+ o(1))|\E_1|-|\E_0|=\left (\frac{1/2-\delta}{1/2+\delta}-\varepsilon_{2}+ o(1)\right )|\E_1|-|\E_0|=-(\varepsilon_2+o(1))|\E_1|<0.
\end{align*}
Therefore the set of vertices labeled `0' contains $O\cup P$ which has cardinality $(1-d)\frac{n}{2}$. However, vertices in $D$ do not have any expert neighbours and as we are in the non-iterative setting, those vertices will be decided uniformly at random. Hence with probability $1/2$ there will be at least $dn/2+1$ vertices  labeled `0' in $D$ and therefore $G$ is not robust against the non-iterative strong adversary.
\end{proof}
\phantomsection
\bibliographystyle{abbrv}
\small

\end{document}